\theoremstyle{plain}
\newtheorem{theorem}{Theorem}[section]
\newtheorem{lemma}[theorem]{Lemma}
\theoremstyle{definition}
\newtheorem{definition}[theorem]{Definition}
\theoremstyle{remark}
\DeclareRobustCommand{\eagleimg}{%
  \begingroup\normalfont \includegraphics[height=1.5\fontcharht\font`\B]{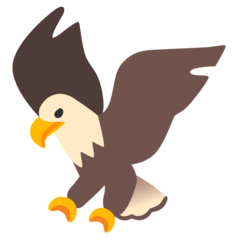}%
  \endgroup
}
\newcommand{\eagle}{\eagleimg}
\newcommand{\predmodel}{M_{\bm{\theta}}}
\newcommand{\feature}{x}
\newcommand{\features}{\bm{x}}
\newcommand{\singlelabel}{y}
\newcommand{\labels}{\bm{\singlelabel}} \newcommand{\pred}{\hat{\singlelabel}} \newcommand{\preds}{\bm{\pred}}
\newcommand{\sample}{\Tilde{\singlelabel}} \newcommand{\samples}{\bm{\sample}} \newcommand{\decision}{\bm{z}}
\newcommand{\optfn}{\decision^*}
\newcommand{\obj}{f}
\newcommand{\dqregret}{DQ_{\text{regret}}}
\newcommand{\pto}{Predict-then-Optimize}
\DeclareMathOperator*{\argmin}{arg\,min} \DeclareMathOperator*{\argmax}{arg\,max}
 \newcommand\numberthis{\addtocounter{equation}{1}\tag{\theequation}}
\definecolor{figurered}{RGB}{197,76,73}
\definecolor{figuregreen}{RGB}{67,140,78}
\definecolor{figureblue}{RGB}{39,97,164}
\definecolor{figureorange}{RGB}{235,111,27}
\definecolor{lodlblue}{RGB}{13,64,147}
\definecolor{ForestGreen}{RGB}{34,139,34}
\title{Leaving the Nest \eagle{}:\\
Going Beyond Local Loss Functions for Predict-Then-Optimize}
\author {
    Sanket Shah\textsuperscript{\rm 1}, Bryan Wilder\textsuperscript{\rm 2}, Andrew Perrault\textsuperscript{\rm 3},
    Milind Tambe\textsuperscript{\rm 1} } \affiliations {
\begin{document}
\maketitle
\begin{abstract}
\pto{} is a framework for using machine learning to perform decision-making under uncertainty. The central research
question it asks is, ``How can we use the structure of a decision-making task to tailor ML models for that specific
task?'' To this end, recent work has proposed learning task-specific loss functions that capture this underlying
structure. However, current approaches make restrictive assumptions about the form of these losses and their impact on
ML model behavior. These assumptions both lead to approaches with high computational cost, and when they are violated in
practice, poor performance. In this paper, we propose solutions to these issues, avoiding the aforementioned assumptions
and utilizing the ML model's features to increase the sample efficiency of learning loss functions. We empirically show
that our method achieves state-of-the-art results in four domains from the literature, often requiring an order of
magnitude fewer samples than comparable methods from past work. Moreover, our approach outperforms the best existing
method by nearly 200\% when the localness assumption is broken.
\end{abstract}

\section{Introduction}
Predict-then-Optimize (PtO)~\citep{donti2017task,elmachtoub2021smart} is a framework for using machine learning (ML) to
perform decision-making under uncertainty. As the name suggests, it proceeds in two steps---first, an ML model is used
to make predictions about the uncertain quantities of interest, then second, these predictions are aggregated and used
to parameterize an optimization problem whose solution provides the decision to be made. Many real-world applications
require both prediction and optimization, and have been cast as PtO problems. For example, recommender systems need to
predict user-item affinity to determine which titles to display~\citep{wilder2019melding}, while portfolio optimization
uses stock price predictions to construct high-performing portfolios~\citep{bengio1997using}. In the context of AI for
Social Good, PtO has been used to plan intervention strategies by predicting how different subgroups will respond to
interventions~\citep{wang2022decision}.

The central research question of PtO is, ``How can we use the structure of an optimization problem to learn predictive
models that perform better \textit{for that specific decision-making task}?". In this paper, we refer to the broad class
of methods used to achieve this goal as \textit{Decision-Focused Learning} (DFL). Recently, multiple papers have
proposed learning task-specific loss functions for DFL~\citep{chung2022decision,lawless2022note,shah2022decision}. The
intuition for these methods can be summarized in terms of the Anna Karenina principle---while perfect predictions lead
to perfect decisions, different kinds of imperfect predictions have different impacts on downstream decision-making.
Such loss functions, then, attempt to use learnable parameters to capture how bad different kinds of prediction errors
are for the decision-making task of interest. For example, a Mean Squared Error (MSE) loss may be augmented with tunable
parameters to assign different weights to different true labels. Then, a model trained on such a loss is less likely to
make the kinds of errors that affect the quality of downstream decisions.

Learning task-specific loss functions poses two major challenges. First, learning the relationship between predictions
and decisions is challenging. To make learning this relationship more tractable, past approaches learn different loss
functions for each instance of the decision-making task, each of which locally approximates the behavior of the
optimization task. However, the inability to leverage training samples across different such instances can make learning
loss functions sample inefficient, especially for approaches that require a large number of samples to learn. This is
especially problematic because creating the dataset for learning these loss functions is the most expensive part of the
overall approach.  In this paper, rather than learning separate loss functions for each decision-making instance, we
learn a mapping from the feature space of the predictive model to the parameters of different local loss functions. This
‘feature-based parameterization’ gives us the best of both worlds—we retain the simplicity of learning local loss
functions while still being able to generalize across different decision-making instances. In addition to increasing
efficiency, this reparameterization also ensures that the learned loss functions are \textit{Fisher Consistent}---a
fundamental theoretical property that ensures that, in the limit of infinite data and model capacity, optimizing for the
loss function leads to optimal decision-making. Past methods for learning loss functions do not satisfy even this basic
theoretical property!

The second challenge with learning loss functions is that it presents a chicken-and-egg problem---to obtain the
distribution of predictions over which the learned loss function must accurately approximate the true decision quality,
a predictive model is required, yet to obtain such a model, a loss function is needed to train it. To address this,
\citet{shah2022decision} use a simplification we call the ``localness of prediction'', i.e., they assume that
predictions will be `close' to the true labels, and generate candidate predictions by adding random noise to the true
labels. However, this doesn't take into account the kinds of predictions that models actually generate and, as a result,
can lead to the loss functions being optimized for unrealistic predictions. In contrast, \citet{lawless2022note} and
\citet{chung2022decision} use a predictive model trained using MSE to produce a single representative sample that is
used to construct a simple loss function. However, this single sample is not sufficient to learn complex loss functions.
We explicitly formulate the goal of sampling a \textit{distribution} of realistic model predictions, and introduce a
technique that we call \textit{model-based sampling} to efficiently generate such samples.
Because these interventions allow us to move away from localness-based simplifications, we call our loss functions
`Efficient Global Losses' or \textit{EGLs} \eagle{}.

In addition to our theoretical Fisher Consistency results (\cref{sec:fisher}), we show the merits of EGLs empirically by
comparing them to past work on four domains (\cref{sec:experiments}). First, we show that in one of our key domains,
model-based sampling is essential for good performance, with EGLs outperforming even the best baseline by nearly 200\%.
The key characteristic of this domain is that it breaks the localness assumption (\cref{sec:localness}), which causes
past methods to fail catastrophically. Second, we show that EGLs achieve state-of-the-art performance on the remaining
three domains from the literature, \textit{despite} having an order of magnitude fewer samples than comparable methods
in two out of three of them. This improvement in sample efficiency translates to a reduction in the time taken to learn
task-specific loss functions, resulting in  an order-of-magnitude speed-up. All-in-all, we believe that these
improvements bring DFL one step closer to being accessible in practice.


\section{Background}
\subsection{Predict-then-Optimize} \label{sec:pto} There are two steps in \pto{} (PtO). In the \textit{Predict} step, a
learned predictive model $\predmodel$ is used to make predictions about uncertain quantities $[\pred_1, \ldots, \pred_N]
= [\predmodel(\feature_1), \ldots, \predmodel(\feature_N)]$ based on some features $[\feature_1, \ldots, \feature_N]$.
Next, in the \textit{Optimize} step, these predictions are aggregated as $\preds = [\pred_1, \ldots, \pred_N]$ and used
to parameterize an optimization problem $\optfn(\preds)$:
\begin{align*}
    \optfn(\preds) = \argmax_{\decision} \hspace{2mm} \obj(\decision; \preds), \quad
     s.t. \hspace{2mm} \decision \in \bm{\Omega}
\end{align*}
where $\obj$ is the objective and $\bm{\Omega} \subseteq \mathbb{R}^{\dim(\decision)}$ is the feasible region. The
solution to this optimization task $\decision = \optfn(\preds)$ provides the optimal decision for the predictions
$\preds$. We call a full set of inputs $\preds$ or $\labels = [\singlelabel_1, \ldots, \singlelabel_N]$ to the
optimization problem an \textit{instance} of the decision-making task. 

However, the optimal decision $\optfn(\preds)$ for the predictions $\preds$ may not be optimal for the true labels
$\labels$. To evaluate the \textit{Decision Quality} (DQ) of a set of predictions $\preds$, we measure how well the
decisions they induce $\optfn(\preds)$ perform on the set of \textit{true labels $\labels$} with respect to the
objective function $\obj$:
\begin{align}
    DQ(\preds, \labels) = \obj(\optfn(\preds); \labels) \label{eqn:dq}
\end{align}

The central question of \pto{}, then, is about how to learn predictive models $\predmodel$ that have high DQ. When
models are trained in a task-agnostic manner, e.g., to minimize Mean Squared Error (MSE), there can be a mismatch
between predictive accuracy and $DQ$, and past work (see \cref{sec:related}) has shown that the structure of the
optimization problem $\optfn$ can be used to learn predictive models with better $DQ$. We refer to this broad class of
methods for tailoring predictive models to decision-making tasks as Decision-Focused Learning (DFL) and describe one
recent approach below.

\subsection{Task-Specific Loss Functions} \label{sec:lodls} Multiple authors have suggested learning task-specific loss
functions for DFL~\citep{chung2022decision,lawless2022note,shah2022decision}. These approaches add learnable parameters
to standard loss functions (e.g., MSE) and tune them, such that the resulting loss functions approximate the `regret' in
DQ for `typical' predictions. Concretely, for the distribution over predictions $\preds = [\predmodel(\feature_1),
\ldots, \predmodel(\feature_N)]$ of the model $\predmodel$, the goal is to choose a loss function $L_{\bm{\phi}}$ with
parameters $\bm{\phi}$ such that:
\begin{gather*}
    \bm{\phi}^* = \argmin_{\bm{\phi}} \mathbb{E}_{\features, \labels, \preds} \left[\big(L_{\bm{\phi}}(\preds, \labels) - \dqregret(\preds, \labels) \big)^2 \right]\\
    \text{where}\quad \dqregret(\preds, \labels) \equiv DQ(\labels, \labels) - DQ(\preds, \labels) \numberthis \label{eqn:learnedloss}
\end{gather*}
where $DQ$ is defined in \cref{eqn:dq}. Note here that the first term in $\dqregret$ is a constant w.r.t. $\hat{y}$, so
minimizing $\dqregret$ is equivalent to maximizing $DQ$. Adding the $DQ(\labels, \labels)$ term, however, makes
$\dqregret$ behave more like a loss function---a minimization objective with a minimum value of 0 at $\preds = \labels$.
As a result, parameterized versions of simple loss functions can learn the structure of $\dqregret$ (and thus $DQ$).

A meta-algorithm for learning predictive models $\predmodel$ using task-specific loss functions is as follows:
\begin{enumerate}[leftmargin=1.2em,itemsep=0em,topsep=0.2em]
    \item \textbf{Sampling $\samples$:} Generate $K$ candidate predictions $\samples^k = [\singlelabel_1 \pm \epsilon_1,
    \ldots, \singlelabel_N \pm \epsilon_N]$, e.g., by adding Gaussian noise $\epsilon_n \sim \mathcal{N}(0, \sigma)$ to
    each true label $\singlelabel_n$. This strategy is motivated by the `localness of predictions' assumption, i.e.,
    that predictions will be close to the true labels.
    \item \textbf{Generating dataset:} Run a optimization solver on the sampled predictions $\samples$ to get the
    corresponding decision quality values $\dqregret(\samples, \labels)$. This results in a dataset of the form
    $[(\samples^1, \dqregret^1), \ldots, (\samples^K, \dqregret^K)]$ for each instance $\labels$ in the training and
    validation set.
    \item \textbf{Learning Loss Function(s):} Learn loss function(s) that minimize the MSE on the dataset from Step 2.
    \citet{lawless2022note} and \citet{chung2022decision} re-weight the MSE loss \textit{for each instance}:
    \begin{gather*}
        \text{L\&Z}^{\labels}_{w}(\preds) = w^{\labels} \cdot ||\preds - \labels||_2^2  \numberthis \label{eqn:landz}
    \end{gather*}
    \citet{shah2022decision} propose 2 families of loss functions, which they call `Locally Optimized Decision Losses'
    (LODLs). The first adds learnable weights to MSE \textit{for each prediction} that comprises the instance $\preds$.
    The second is more general---an arbitrary quadratic function of the predictions that comprise $\preds$, where the
    learned parameters are the coefficients of each polynomial term.
    \begin{align*}
        \text{WMSE}^{\labels}_{\bm{w}}(\preds) &= {\textstyle \sum_{n = 1}^{N}} w^{\labels}_n \cdot (\pred_n - \singlelabel_n)^2 \\
        \text{Quadratic}^{\labels}_{H}(\preds) &= (\preds - \labels)^T H^{\labels} (\preds - \labels) \numberthis \label{eqn:lodl}
    \end{align*}
    The parameters for these losses $w > w_{\min} > 0$ and $H = L^TL + w_{\min} \cdot I$ are constrained to ensure that
    the learned loss is convex. They also propose `Directed' variants of each loss in which the parameters to be learned
    are different based on whether $(\pred - \singlelabel) > 0$ or not. These parameters are then learned for every
    instance $\labels$, e.g., using gradient descent.
    \item \textbf{Learning predictive model $\predmodel$:} Train the predictive model $\predmodel$ on the loss functions
    learned in the previous step, e.g., a random forest~\citep{chung2022decision}, a neural
    network~\citep{shah2022decision}, or a linear model~\citep{lawless2022note}.
\end{enumerate}
In this paper, we propose two modifications to the meta-algorithm above. We modify Step 1 in \cref{sec:modelbased} and
Step 3 in \cref{sec:featurebased}. Given that these contributions help overcome the challenges associated with the
losses being "local", we call our new method \textit{EGL} \eagle{} (Efficient Global Losses).


\section{Related Work}\label{sec:related}
In \cref{sec:lodls}, we contextualize why the task-specific loss function approaches of
\citet{chung2022decision,lawless2022note,shah2022decision} make sense, and use the intuition to scale model-based
sampling and better empirically evaluate the utility of different aspects of learned losses.

In addition to learning loss functions, there are alternative approaches to DFL. The most common of these involves
optimizing for the decision quality directly by modeling the \pto{} task end-to-end and differentiating through the
optimization problem~\citep{agrawal2019differentiable,amos2018differentiable,donti2017task}. However, discrete
optimization problems do not have informative gradients, and so cannot be used as-is in an end-to-end pipeline. To
address this issue, most past work either constructs task-dependent surrogate problems that \textit{do} have informative
gradients for learning predictive
models~\citep{ferber2020mipaal,mensch2018differentiable,tschiatschek2018differentiable,wilder2019melding,wilder2019end}
or propose loss functions for specific classes of optimization problems (e.g. with linear
objectives~\citep{elmachtoub2021smart,mulamba2021contrastive}). However, the difficulty of finding good task-specific
relaxations for arbitrary optimization problems limits the adoption of these techniques in practice. On the other hand,
learning a loss function is more widely applicable as it applies to \textit{any optimization problem} and only requires
access to a solver for the optimization problem; as a result, we focus on improving this approach in this paper.


\section{Part One: Feature-based Parameterization}\label{sec:featurebased} The challenge of learning the
$\dqregret(\preds, \labels)$ function is that it is as hard as learning a closed-form solution to an arbitrary
optimization problem. To make the learning problem easier, past approaches typically learn multiple local
approximations, e.g., $\dqregret^i(\preds_i)$ \textit{for every decision-making instance} $\labels_i$ in the training
and validation set. This simplification trades off the complexity of learning a single complex $\dqregret$ for the cost
of learning many simpler $\dqregret^i$ functions:
\begin{align*}
    \underbrace{\dqregret(\preds, \labels)}_{\text{complex}} \approx \underbrace{\dqregret^i(\preds_i)}_{\text{simple}}, \; \forall \preds_i \approx \labels_i + \epsilon, \; \forall i \in [N]
\end{align*}
However, this is problematic because calculating $\dqregret$ for each of the sampled predictions $\samples$ is the most
expensive step in learning task-focused loss functions (see \cref{sec:timing}). Specifically, learning each
$\dqregret^i$ can require as many as $\Omega(\dim(\labels)^2)$ samples (for the `DirectedQuadratic' loss function from
\citet{shah2022decision}), leading to the need for $N \cdot \dim(\labels)^2$ samples overall. As a result, this approach
does not scale as the number of instances ($i \in [N]$) in the dataset or the size of the optimization problem
($\dim(\labels)$) increases.

To make learning loss functions more scalable, we have to find a way to trade-off the hardness of learning a global
approximation against the cost of learning a local approximation. The standard machine learning approach for this is to
replace learning $\dqregret^i$ with a loss $L$ that allows pooling samples across different instances $i \in [N]$ and
predictions $\dim(\labels)$. However, in doing this, two important design choices have to be made---(1) What should be
the functional form of $L$?, and (2) What should the inputs to $L$ be?

For the first, we could set $L$ to simply be some neural network model and try to learn some sort of global
approximation to $\dqregret$. However, in addition to the fact that this does not address the ``hardness'' of learning a
global approximation, \citet{shah2022decision} show that the lack of convexity of the learned neural networks can
sometimes lead to catastrophic failures, i.e., models that perform worse than random guessing. Building on this insight,
we propose using a neural network to instead learn a mapping to the \textit{parameters of a convex-by-construction loss
function family}. That way, regardless of the learned parameters, we ensure that $\preds^* = \labels$ is a minimizer of
the resulting loss function and so the learned loss is guaranteed to never fail catastrophically. 

For the second, we propose using the features $x_i$ associated with the prediction $\singlelabel_i$ as the input to our
learned loss. To ensure that $\preds^* = \labels$ regardless of how well we model $\dqregret$, we show in
\cref{sec:fisher} that the loss functions must have the same set of parameters for a given set of features $\features$
in order to be Fisher Consistent (discussed in more detail below).

Putting both of these together, EGLs \eagle{} learn a mapping $P_{\bm{\psi}}(\feature)$ from the features $\feature$ of
any prediction in the dataset to the corresponding parameters in convex-by-construction LODL loss families as follows:
\begin{itemize}[leftmargin=1.2em,itemsep=0em,topsep=0.2em]
    \item \textbf{WeightedMSE:} We learn a mapping $P_{\bm{\psi}}: \feature \to w$ from the features $\feature$ of a
    optimization parameter $\singlelabel$ to its associated `weight' $w$. Intuitively, the weight $w$ encodes how
    important a given prediction is, and so EGLs learn to predict the impact of different predictions' errors on
    $\dqregret$. 
    \item \textbf{Quadratic:} For every pair of predictions $\pred_i$ and $\pred_j$, we learn a mapping $P_{\bm{\psi}}:
    (\feature_i, \feature_j) \to L_{ij}$ where $L = [[L_{ij}]]$ is the matrix that parameterizes the loss function (see
    \cref{eqn:lodl}).
    \item \textbf{Directed Variants:} Instead of learning a mapping from the features $\feature$ to a single parameter,
    we instead learn a mapping from $\feature \to [w^+, w^-]$ for `Directed WeightedMSE' and $(\feature_i, \feature_j)
    \to [L^{++}, L^{\pm}, L^{-+}, L^{--}]$ for `Directed Quadratic'.
\end{itemize}
We then optimize for the optimal parameters $\bm{\psi}^*$ of the learned losses along the lines of past work:
\begin{gather*}
    \bm{\psi}^* = \argmin_{\bm{\psi}} \mathbb{E}_{\features, \labels, \preds} \left[\big(L_{P_{\bm{\psi}}(\features)}(\preds, \labels) - \dqregret(\preds, \labels) \big)^2 \right]
\end{gather*}
where $L_{\bm{\phi}} = L_{P_{\bm{\psi}}(\features)}$ is the learned loss function. For our experiments, the model
$P_{\bm{\psi}}$ is a 4-layer feedforward neural network with a hidden dimension of 500 trained using gradient descent.
Given that we learn a mapping from the features of a given prediction $\feature$ to the corresponding loss function
parameter(s), we call our resulting approach \emph{feature-based parameterization (FBP)}.

\subsection{Fisher Consistency} \label{sec:fisher} One desirable property of a \pto{} loss function is that, in the
limit of infinite data and model capacity, the optimal prediction induced by the loss also minimizes the decision
quality regret. If this is true, we say that loss $\ell$ is ``Fisher Consistent" w.r.t. the decision quality regret
$\dqregret$~\citep{elmachtoub2021smart}.
\begin{definition}[Fisher Consistency]
A loss $\ell(\preds, \labels)$ is said to be Fisher Consistent with respect to the decision quality regret $\dqregret$
if the set of minimizers of the loss function $\preds^*(\features) = \argmin_{\preds} \mathbb{E}_{\labels |
\features}[\ell(\preds, \labels)]$ also minimize $\dqregret$ for all possible distributions $P(\features, \labels)$.
\end{definition}
However, the methods proposed in past work do not satisfy this property even for the simplest of cases---in which the
objective $\obj$ of the optimization problem $\optfn$ is linear. Concretely:
\begin{theorem}\label{thm:lodlfisher} Weighting-the-MSE losses are not Fisher Consistent for \pto{} problems in which
the optimization function $\optfn$ has a linear objective.
\end{theorem}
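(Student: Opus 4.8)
The plan is to prove the statement by constructing a counterexample: a \pto{} problem with a linear objective, together with a data distribution $P(\features,\labels)$, on which the prediction that minimizes the expected weighting-the-MSE loss induces a strictly suboptimal decision.

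First I would record two facts. (i) For a linear objective $\obj(\decision;\preds)=\preds^{\top}\decision$, the expected regret of a fixed decision $\decision$ is a constant minus $\mathbb{E}[\labels\mid\features]^{\top}\decision$; hence the expected-regret-minimizing decisions are exactly $\argmax_{\decision\in\bm{\Omega}}\mathbb{E}[\labels\mid\features]^{\top}\decision$, and a loss can be Fisher Consistent only if its minimizer $\preds^*(\features)$ has $\optfn(\preds^*(\features))$ in that set. (ii) For a weighting-the-MSE loss --- the scalar form $\ell(\preds,\labels)=w^{\labels}\,\|\preds-\labels\|_2^2$ of \cref{eqn:landz}, or the per-coordinate / directed variants of \cref{eqn:lodl} --- the weights are fit \emph{separately for each instance} $\labels$ so as to approximate $\dqregret(\cdot,\labels)$ on samples drawn by the meta-algorithm's sampling step, and $\argmin_{\preds}\mathbb{E}_{\labels\mid\features}[\ell(\preds,\labels)]$ is the corresponding \emph{weight-weighted} conditional mean $\mathbb{E}[w^{\labels}\labels\mid\features]/\mathbb{E}[w^{\labels}\mid\features]$, which can differ in direction from $\mathbb{E}[\labels\mid\features]$ when the weights vary across instances.

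Next I would specialize to the simplest linear-objective problem --- choosing the single highest-value coordinate, $\optfn(\preds)=e_{\argmax_n \pred_n}$ with $\obj(\decision;\preds)=\preds^{\top}\decision$ --- and take the features to be a point mass, so that the ``instances'' are exactly the realizations of $\labels$ and $\preds^*$ is a weighted mean. The decisive point is that the fitted weight varies strongly with the instance: if $\labels$ has a strict maximum then $\dqregret(\preds,\labels)=0$ on a whole neighborhood of $\labels$, so the least-squares-optimal weight for that instance is negligible (exactly zero for bounded sampling noise); whereas if $\labels$ is a near-tie between its two largest coordinates, even small perturbations flip the chosen coordinate and incur regret of order the gap, so its fitted weight stays bounded away from zero. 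I would then take a two-point distribution --- e.g.\ $\labels=(10,9)$ and $\labels=(0,6)$, each with probability $1/2$: the plain mean $(5,7.5)$ makes $e_2$ the unique expected-regret-minimizing decision, but the weight-weighted mean is proportional to $(10w_a,\,9w_a+6w_b)$ where $w_a$ (near-tie) $\gg w_b$ (clear winner), so it ranks coordinate $1$ first and $\optfn(\preds^*)=e_1\neq e_2$, violating Fisher Consistency. The per-coordinate (WMSE) and directed variants are handled by the same construction, since each again reduces to a (coordinate-wise or sign-wise) weighted mean that all but ignores the clear-winner instance.

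The step I expect to be the main obstacle is making rigorous the claim in (ii) that the fitted weights are non-uniform in exactly the required direction --- i.e., that the least-squares fit drives the clear-winner instance's weight to (near) zero while keeping the near-tie instance's weight bounded below. This requires computing, or bounding, the minimizer of $\mathbb{E}_{\bm{\epsilon}}\!\big[\big(w\|\bm{\epsilon}\|_2^2-\dqregret(\labels+\bm{\epsilon},\labels)\big)^2\big]$ as a function of $\labels$ and the noise scale $\sigma$ --- immediate for bounded noise, a tail estimate for Gaussian noise --- and then fixing $\sigma$ small enough that the ratio $w_a/w_b$ exceeds the threshold forced by the chosen label values (here $6$). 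A secondary point to state carefully is the quantifier in the definition of Fisher Consistency: the loss is learned once, under a fixed sampling scheme, and consistency is then required against \emph{all} $P(\features,\labels)$, so exhibiting the single bad distribution above --- with the learned loss held fixed --- suffices.
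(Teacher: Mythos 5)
Your proposal is correct and follows essentially the same route as the paper's proof: a two-item, pick-the-highest counterexample with a two-point label distribution, in which the per-instance fitted weights differ enough that the weighted-mean minimizer of the expected loss (the paper's \cref{lemma:weights}, your fact (ii)) ranks the wrong item, violating Fisher Consistency. The only substantive differences are cosmetic --- the paper uses labels $(0,0.55)$ and $(1,0.55)$ with numerically fitted weights, whereas you force the weight disparity analytically; just note that the noise scale must be \emph{intermediate} (large enough that the near-tie instance actually incurs regret on some samples, small enough that the clear-winner instance never does), not merely ``small enough,'' and that the $w > w_{\min} > 0$ constraint means the clear-winner weight is $w_{\min}$ rather than exactly zero, which your ratio condition $w_a > 6\,w_b$ still accommodates.
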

\begin{proof}
    \label{sec:example}
    We show a proof by counter-example below. Consider a PtO problem in which the goal is to (a) predict the utility of
    a resource for two individuals (say, $A$ and $B$), and then (b) give the resource to the individual with higher
    utility. Consider the utilities to be drawn from:
    \begin{align*}
        \labels = (\singlelabel_A, \singlelabel_B) = 
        \scalebox{0.9}{%
        $\begin{cases}
            {\color{figureblue} (0, 0.55),\quad\text{ with probability }0.5}\\
            {\color{figureorange} (1, 0.55),\quad\text{ with probability }0.5}
        \end{cases}$%
        }
    \end{align*}
    The optimal decision, then, is to give the resource to individual B because $\mathbb{E}[\pred_B] = 0.55 >
    \mathbb{E}[\pred_A] = 0.5$.

    To learn a predictive model using a Weighting-the-MSE loss, we follow the meta-algorithm in \cref{sec:lodls}:
    \begin{enumerate}[leftmargin=1.4em,itemsep=0em,topsep=0.2em]
    \item We sample K = 25 points in the ``neighborhood'' of each of the true labels {\color{figureblue} $(0, 0.55)$}
    and {\color{figureorange} $(1, 0.55)$}. For simplicity, we add uniform-random noise $\epsilon^k \sim U[-1,1]$ only
    to $\singlelabel_A$ to get $\samples^k = (\singlelabel_A \pm \epsilon^k, \singlelabel_B)$.
    \item We calculate $\dqregret$ for each sample. We plot $\dqregret$ vs $\pred_A$ in \cref{fig:example} (given
    $\pred_B$ is fixed).
    \begin{figure}
        \centering
        \centering
        \includegraphics[width=0.8\linewidth]{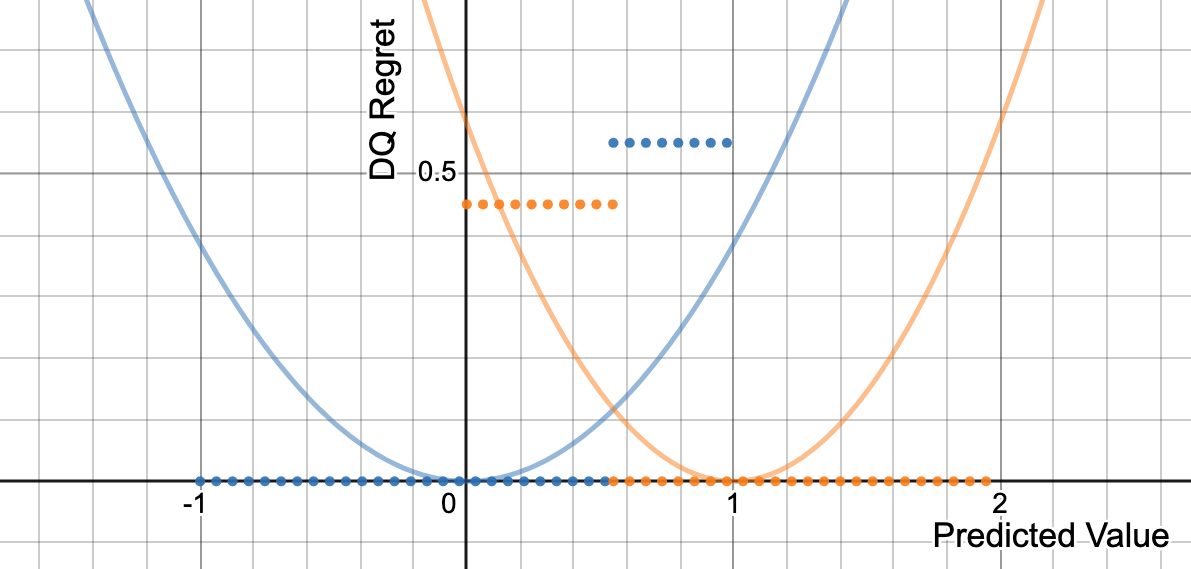}
        \caption{\textbf{A plot of $\dqregret$ vs. $\pred_A$} (for fixed $\pred_B$). The samples from Step 1 correspond
        to the {\color{figureblue}blue dots} for the {\color{figureblue}$(0, 0.55)$ instance} and the
        {\color{figureorange}orange dots} for the {\color{figureorange}$(1, 0.55)$ instance}. We also plot the learned
        weighted MSE loss for each instance using solid lines in their corresponding colors.}
        \label{fig:example}
    \end{figure}
    \item Based on this dataset, we fit a loss of the form given by \cref{eqn:landz}. Because there is only one weight
    being learned here, this can be seen as either a WeightedMSE LODL from \citet{shah2022decision} or a reweighted
    task-loss from ~\citet{lawless2022note}, as seen in \cref{fig:example}.
    \item Finally, we estimate a predictive model based on this loss. The optimal prediction given this loss is
    $\preds^* = (\pred^*_A, \pred^*_B) \approx (0.602,0.55)$ (see \cref{sec:calculation} for details).
    \end{enumerate}
    Under this predictive model, the optimal decision would be to give the resource to individual $A$ because they have
    higher predicted utility, i.e., $\pred^*_A > \pred^*_B$. \textit{But this is suboptimal!}
\end{proof}
More generally, because ``WeightedMSE'' can be seen as a special case of the other methods in \citet{shah2022decision},
none of the loss functions in past work are Fisher Consistent! To gain intuition for why this happens, let us analyze
the predictions that are induced by weighting-the-MSE type losses:
\begin{lemma}\label{lemma:weights} The optimal prediction $\pred^*(\feature)$ for some feature $\feature$ given a
weighted MSE loss function with weights $w^{\labels}$ associated with the label $\singlelabel \in \labels$ is
$\pred^*(\feature) = \frac{\mathbb{E}_{\labels | \feature}[w^{\labels} \cdot y]}{\mathbb{E}_{\labels |
\feature}[w^{\labels}]}$, given infinite model capacity.
\end{lemma}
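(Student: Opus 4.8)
\emph{Proof plan.} The plan is to use the standard consequence of the ``infinite model capacity'' assumption: it decouples the population risk across feature values, so that the optimal predictor can be found by minimizing, for each $\feature$ separately, a one-dimensional convex objective. When $\predmodel$ is trained on the weighted-MSE loss, the quantity being minimized is $\mathbb{E}_{\features,\labels}\big[\sum_n w^{\labels}_n\,(\predmodel(\feature_n)-\singlelabel_n)^2\big]$; because $\predmodel$ ranges over all functions, the value it assigns to a given feature vector $\feature$ may be chosen independently of its values elsewhere, so the problem reduces to minimizing the conditional risk $R(\pred;\feature):=\mathbb{E}_{\labels|\feature}\big[w^{\labels}\,(\pred-\singlelabel)^2\big]$ over $\pred\in\mathbb{R}$, where $w^{\labels}$ and $\singlelabel$ denote the (jointly distributed, given $\feature$) weight and true label attached to that prediction.

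First I would solve this scalar problem. Expanding the square gives $R(\pred;\feature) = \mathbb{E}_{\labels|\feature}[w^{\labels}]\,\pred^{2} - 2\,\mathbb{E}_{\labels|\feature}[w^{\labels}\singlelabel]\,\pred + \mathbb{E}_{\labels|\feature}[w^{\labels}\singlelabel^{2}]$, a quadratic in $\pred$ whose leading coefficient is $\mathbb{E}_{\labels|\feature}[w^{\labels}]$. The constraint $w > w_{\min} > 0$ imposed on the LODL weights in \cref{sec:lodls} makes this coefficient strictly positive (and it is finite whenever the weights are bounded, which the parameterization $P_{\bm{\psi}}$ ensures), so $R(\cdot;\feature)$ is strictly convex with a unique minimizer. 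Setting the derivative to zero, $2\mathbb{E}_{\labels|\feature}[w^{\labels}]\,\pred - 2\mathbb{E}_{\labels|\feature}[w^{\labels}\singlelabel] = 0$, and solving yields $\pred^{*}(\feature) = \mathbb{E}_{\labels|\feature}[w^{\labels}\singlelabel]\big/\mathbb{E}_{\labels|\feature}[w^{\labels}]$, which is exactly the claimed expression; positivity of the weights also guarantees the denominator is nonzero, so the formula is well-defined. An essentially identical computation handles the Directed weighted-MSE loss, where $R(\cdot;\feature)$ becomes piecewise quadratic in $\pred$ (the active weight being the ``$+$'' or ``$-$'' parameter according to the sign of $\pred-\singlelabel$); the optimum still lands in one of the two pieces, and the formula above describes it with $w^{\labels}$ read off as the weight for that side.

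I expect the only delicate point --- more a matter of rigor than a genuine obstacle --- to be justifying the ``optimize pointwise'' reduction: one has to invoke the standard fact that for a risk of the form $\mathbb{E}_{\features}\!\left[\mathbb{E}_{\labels|\features}[\,\cdot\,]\right]$ taken over all measurable predictors, a predictor is optimal if and only if it minimizes the inner conditional risk for $P(\features)$-almost every $\feature$, together with a brief check that $\feature\mapsto\pred^{*}(\feature)$ is measurable. I would dispatch this with a one-line remark rather than a full measure-theoretic argument, since it is routine and orthogonal to the content of the lemma, which is simply that weighted-MSE losses bias the learned predictor toward a weight-tilted conditional mean --- the mechanism underlying the Fisher-inconsistency established in \cref{thm:lodlfisher}.
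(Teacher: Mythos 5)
Your proposal is correct and follows essentially the same route as the paper's proof: use infinite model capacity to decouple the prediction at each feature, then differentiate the conditional expected weighted squared error and solve the resulting linear first-order condition. Your additional remarks (strict convexity from $w > w_{\min} > 0$, the well-definedness of the denominator, and the pointwise-optimization justification) only make explicit what the paper's argument leaves implicit.
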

The proof is presented in \cref{sec:lemmaproof}. In their paper, \citet{elmachtoub2021smart} show that the optimal
prediction that minimizes $\dqregret$ is $\pred^*(\feature) = \mathbb{E}_{\labels | \feature}[\singlelabel]$, i.e., the
optimal prediction should depend only on the value of the labels corresponding to that feature. However,
$\pred^*(\feature) = \frac{\mathbb{E}_{\labels | \feature}[w^{\labels} \cdot \singlelabel]}{\mathbb{E}_{\labels |
\feature}[w^{\labels}]}$ in weighted MSE losses, and so the optimal prediction depends on not only the labels but also
the \textit{weight} associated with them. While this is not a problem by itself, the weight learned for a label
$\singlelabel \in \labels$ is dependent on not only the label itself but also the \textit{other labels
$\labels_{-\singlelabel}$ in that instance}. In the context of the counter-example in the proof of
\cref{thm:lodlfisher}, the weight associated with individual $A$ is dependent on the utility of individual $B$ (via
$\dqregret$). As a result, it is possible to create a distribution $P(\features, \labels)$ for which such losses will
not be Fisher Consistent. However, this is not true for WeightedMSE with FBP!

\begin{theorem}
    WeightedMSE with FBP is Fisher Consistent for \pto{} problems in which the optimization function $\optfn$ has a
    linear objective.
\end{theorem}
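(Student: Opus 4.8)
The plan is to reduce the claim to two facts that are already on the table: \cref{lemma:weights}, which characterizes the population minimizer of a weighted-MSE loss, and the result of \citet{elmachtoub2021smart} that, for a linear objective, the $\dqregret$-optimal prediction is the conditional mean $\mathbb{E}_{\labels\mid\feature}[\singlelabel]$. The only structural change FBP introduces is that the weight attached to the prediction at feature $\feature$ is $w = P_{\bm\psi}(\feature)$, a deterministic function of $\feature$ alone, rather than a quantity $w^{\labels}$ that is fit per instance. I would exploit exactly that.

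First I would note that, because the WeightedMSE loss $\sum_n w_n(\pred_n-\singlelabel_n)^2$ separates coordinate-wise and each summand is strictly convex (the constraint $w > w_{\min} > 0$ guarantees this), the population minimizer $\preds^*(\feature) = \argmin_{\preds}\mathbb{E}_{\labels\mid\feature}[L_{P_{\bm\psi}(\features)}(\preds,\labels)]$ exists, is unique, and may be computed one coordinate at a time. Applying \cref{lemma:weights} with $w^{\labels}$ replaced by $w(\feature)=P_{\bm\psi}(\feature)$ gives $\pred^*(\feature) = \mathbb{E}_{\labels\mid\feature}[w(\feature)\cdot\singlelabel]\big/\mathbb{E}_{\labels\mid\feature}[w(\feature)]$; since $w(\feature)$ is constant with respect to the conditional law $\labels\mid\feature$, it factors out of numerator and denominator and cancels, leaving $\pred^*(\feature) = \mathbb{E}_{\labels\mid\feature}[\singlelabel]$. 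This is precisely the step where the counter-example behind \cref{thm:lodlfisher} is neutralized: there the weight for $\singlelabel_A$ depended on $\singlelabel_B$ through $\dqregret$, so it could not leave the expectation; under FBP it can.

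It then remains to invoke \citet{elmachtoub2021smart}: for a \pto{} problem with a linear objective, by linearity $\mathbb{E}_{\labels\mid\feature}[\obj(\decision;\labels)] = \obj(\decision;\mathbb{E}_{\labels\mid\feature}[\singlelabel])$, so the decision induced by the conditional-mean prediction is optimal in expectation and hence minimizes the expected decision quality regret (the $DQ(\labels,\labels)$ term of $\dqregret$ being independent of $\preds$). Chaining the two facts, the unique minimizer of the FBP-WeightedMSE population loss at every $\feature$ equals the conditional mean, which minimizes $\dqregret$; and since nothing in the argument used any property of $P(\features,\labels)$ beyond the existence of conditional expectations, this holds for every distribution — i.e., Fisher Consistency.

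The part to get right, rather than a genuine obstacle, is the bookkeeping around the ``set of minimizers'' and the ``for all distributions'' quantifier: one must argue that minimizing the pooled loss $\mathbb{E}_{\features,\labels}[\,\cdot\,]$ over all features simultaneously still forces the conditional-mean predictor at each $\feature$ (this is immediate because the pooled objective is an expectation over $\feature$ of per-feature terms minimized independently), and that strict convexity collapses the minimizer set to the single conditional-mean function so the definition is met. A secondary point worth a sentence is that the statement concerns the minimizer of the loss, so it suffices that the loss family can realize \emph{some} positive weight function (e.g. a constant one), which the neural-network parameterization $P_{\bm\psi}$ trivially can; no claim about what a finite-capacity $P_{\bm\psi}$ actually learns is needed.
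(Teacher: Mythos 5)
Your proposal is correct and takes essentially the same route as the paper's own proof: substitute the feature-determined weight $w(\feature)=P_{\bm{\psi}}(\feature)$ into \cref{lemma:weights}, cancel it from numerator and denominator to get $\pred^*(\feature)=\mathbb{E}_{\labels\mid\feature}[\singlelabel]$, and invoke \citet{elmachtoub2021smart} to conclude this minimizes $\dqregret$. The additional bookkeeping you supply (strict convexity from $w>w_{\min}>0$, the per-feature decomposition of the pooled objective) only elaborates steps the paper leaves implicit.
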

\begin{proof}
    In WeightedMSE with FBP, the weights associated with some feature $\feature$ are not independently learned for each
    instance $\labels$ but are instead a \textit{function of the features $\feature$}. As a result, the weight
    $w^{\labels}$ associated with that feature is the same across all instances, i.e., $w^{\labels} = w(\feature)$,
    $\forall \labels$. Plugging that into the equation from \cref{lemma:weights}:
    \begin{align*}
        \pred^*(\feature) &= \frac{\mathbb{E}_{\labels | \feature}[w^{\labels} \cdot y]}{\mathbb{E}_{\labels | \feature}[w^{\labels}]} = \frac{w(\feature)\cdot \mathbb{E}_{\labels | \feature}[y]}{w(\feature)} = \mathbb{E}_{\labels | \feature}[y]
    \end{align*}
    which is a minimizer of $\dqregret$~\citep{elmachtoub2021smart}. Thus, WeightedMSE with FBP is Fisher Consistent.
\end{proof}

\section{Part Two: Model-based Sampling}\label{sec:modelbased} Loss functions serve to give feedback to the model.
However, to make them easier to learn, their expressivity is often limited. As a result, they cannot estimate
$\dqregret$ accurately for \textit{all} possible predictions but must instead concentrate on a subset of ``realistic
predictions'' for which the predictive model $\predmodel$ will require feedback during training. However, there is a
chicken-and-egg problem in learning loss functions on realistic predictions---a model is needed to generate such
predictions, but creating such a model would in turn require its own loss function to train on.

Past work has made the assumption that the predictions $\preds$ will be close to the actual labels $\labels$ to
efficiently generate potential predictions $\samples$. However, if the assumption does not hold, Gaussian sampling may
not yield good results, as seen in \cref{sec:localness}. Instead, in this paper, we propose an alternative:
\emph{model-based sampling (MBS)}. Here, to generate a \textit{distribution} of potential predictions using this
approach, we train a predictive model $\predmodel$ on a standard loss function (e.g., MSE). Then, at equally spaced
intervals during the training process, we use the intermediate model to generate predictions for each problem instance
in the dataset. These form the set of \textit{potential predictions} $\samples$ based on which we create the dataset and
learn loss functions.
The hyperparameters associated with this approach are:
\begin{itemize}[leftmargin=1.2em,itemsep=0em,topsep=0.2em]
\item \textbf{Number of Models:} Instead of sampling predictions from just one model, we can train multiple models to
increase the diversity of the generated predictions. In our experiments, we choose from $\{1, 5, 10\}$ predictive
models.
\item \textbf{LR} and \textbf{Number of Training Steps:} The learning rates are chosen from $\{10^{-6}, 10^{-5}, \ldots,
1\}$ with a possible cyclic schedule~\citep{smith2017cyclical}. We use a maximum of $50000$ updates across all the
models.
\end{itemize}
Empirically, we find that a high learning rate and large number of models works best. Both of these choices increase the
diversity of the generated samples and help create a richer dataset for learning loss functions.

\subsection{Localness of Predictions} \label{sec:localness} To illustrate the utility of model-based sampling, we
analyze the Cubic Top-K domain proposed by~\citet{shah2022decision}.
The goal in this domain is to fit a linear model to approximate a more complex cubic relationship between $\feature$ and
$\singlelabel$ (\cref{fig:cubic}, \cref{sec:localnessapp}). This could be motivated by
explainablity~\cite{rudin2019stop,pmlr-v84-hughes18a}, data efficiency, or simplicity. The localness assumption breaks
here because it is not possible for linear models (or low-capacity models more generally) to closely approximate the
true labels of a more complex data-generating process. The objective of the learned loss function, then, is to provide
information about \textit{what kind of suboptimal predictions are better than others}.

Learned losses accomplish this by first generating plausible predictions and then learning how different sorts of errors
change the decision quality. In this domain, the decision quality is solely determined by the point with the highest
predicted utility. As can be seen in \cref{fig:cubic} (\cref{sec:localnessapp}), the highest values given $\feature \sim
U[-1, 1]$ are either at $x = -0.5$ or $x = 1$. In fact, because the function is flatter around $x = -0.5$, there are
more likely to be large values there. When Gaussian sampling~\cite{shah2022decision} is used to generate candidate
predictions, the highest sampled values are also more likely to be at $x = -0.5$ because the added noise has a mean of
zero. However, \textit{a linear model cannot have a maximum value at $x = -0.5$}, only $x \in \{-1, 1\}$. As a result,
the loss functions learned based on the samples from this method focus on the wrong subset of labels and lead to poor
downstream performance. On the other hand, the candidate predictions generated by model-based sampling are the outputs
of linear models, allowing the loss functions to take this into account. We visualize this phenomenon in
\cref{fig:localness}.

In their paper, \citet{shah2022decision} propose a set of ``directed models'' to make LODLs perform well in this domain.
However, these models only learn useful loss functions because the value of the label at $x = 1$ is \textit{slightly
higher} than the value at $x = -0.5$. To show this, we create a variant of this domain called ``(Hard) Cubic Top-K'' in
which  $y_{x = -0.5} > y_{x = 1}$. Then, in \cref{tab:overall}, we see that even the ``directed'' LODLs fail
catastrophically in this domain, while the loss functions learned with model-based sampling perform well.


\begin{table*}[h]
\centering
\caption{\textbf{Overall Results.} The entries represent the Mean Normalized Test DQ $\pm$ SEM. Methods that are not
applicable to specific domains are denoted by `-'. Bolded values represent the set of methods that outperform the others
by a statistically significant margin (p-value < 0.05). Our method achieves state-of-the-art performance without any
handcrafting and an order of magnitude fewer samples.}
\label{tab:overall}
\resizebox{0.8\linewidth}{!}{%
\begin{tabular}{cccccc}
\toprule
\multirow{3}[4]{*}{Category}&\multirow{3}[4]{*}{Method} & \makecell{New Domain} & \multicolumn{3}{c}{Domains from the
Literature}\\
 \cmidrule(lr){3-3} \cmidrule(lr){4-6} && \makecell{(Hard) Cubic\\Top-K} & \makecell{Cubic\\Top-K} &
 \makecell{Web\\Advertising} & \makecell{Portfolio\\Optimization} \\
\midrule
2-Stage & MSE & -0.65 $\pm$ 0.04 & -0.50 $\pm$ 0.06 & 0.60 $\pm$ 0.04 & 0.04 $\pm$ 0.00 \\
\arrayrulecolor{black!40}\midrule \multirow{4}[5]{*}{\makecell{Expert-crafted\\Surrogates}} & SPO+ & -0.68 $\pm$ 0.00 &
\textbf{0.96 $\pm$ 0.00} & - & - \\ & \makecell{Entropy-Regularized\\Top-K} & 0.24 $\pm$ 0.08 & \textbf{0.96 $\pm$ 0.00}
& - & - \\ & Multilinear Relaxation & - & - & 0.74 $\pm$ 0.01 & - \\ & Differentiable QP & - & - & - & 0.141 $\pm$ 0.003
\\
\arrayrulecolor{black!40}\midrule \multirow{4}{*}{\makecell{Learned Losses}} & L\&Z (1 Sample) & -0.68 $\pm$ 0.00 &
-0.96 $\pm$ 0.00 & 0.65 $\pm$ 0.02 & 0.133 $\pm$ 0.005 \\ &\makecell{LODL (32 Samples)} & -0.68 $\pm$ 0.00 & -0.38 $\pm$
0.29 & 0.84 $\pm$ 0.04 & \textbf{0.146 $\pm$ 0.003}\\ &\makecell{LODL (2048 Samples)} & -0.67 $\pm$ 0.01 & \textbf{0.96
$\pm$ 0.00} & \textbf{0.93 $\pm$ 0.01} & \textbf{0.154 $\pm$ 0.005} \\ &\makecell{EGL \eagle{} (32 Samples)} &
\textbf{0.69 $\pm$ 0.00} & \textbf{0.96 $\pm$ 0.00} & \textbf{0.95 $\pm$ 0.01} & \textbf{0.153 $\pm$ 0.004} \\
\arrayrulecolor{black}\bottomrule \end{tabular}%
}
\end{table*}

\section{Experiments} \label{sec:experiments} In this section, we validate EGLs empirically on four domains from the
literature. We provide brief descriptions of the domains below but refer the reader to the corresponding papers for more
details.

\vspace{0.2em}\noindent \textbf{Cubic Top-K~\cite{shah2022decision}}\hspace{0.5em} Learn a model whose top-k predictions
have high corresponding true labels.
\begin{itemize}[leftmargin=1em,itemsep=0em,topsep=0.2em]
\item \textit{Predict:} Predict resource $n$'s utility $\pred_n$ using a linear model with feature $x_n \sim U[-1, 1]$.
The true utility is $y_n = 10x_n^3 - 6.5x_n$ for the standard version of the domain and $y_n = 10x_n^3 -
\mathbf{7.5}x_n$ for the `hard' version. The predictive model is linear, i.e., $\predmodel(x) = mx + c$.
\item \textit{Optimize:} Out of $N=50$ resources, choose the top $K = 1$ resources with highest utility.
\end{itemize}

\noindent \textbf{Web Advertising~\cite{wilder2019melding}}\hspace{0.5em} The objective of this domain is to predict the
Click-Through-Rates (CTRs) of different (user, website) pairs such that good websites to advertise on are chosen.
\begin{itemize}[leftmargin=1em,itemsep=0em,topsep=0.2em]
\item \textit{Predict:} Predict the CTRs $\preds_m$ for $N = 10$ fixed users on $M = 5$ websites using the website's
features $\feature_m$. The features for each website are obtained by multiplying the true CTRs $\labels_m$ from the
Yahoo! Webscope Dataset~\citep{webscope} with a random $N \times N$ matrix $A$, resulting in $\features_m = A
\labels_m$. The predictive model $\predmodel$ is a 2-layer feedforward network with a hidden dimension of 500.
\item \textit{Optimize:} Choose which $K = 2$ websites to advertise on such that the expected number of users who click
on an advertisement at least once (according to the CTR matrix) is maximized. The objective to be maximized is
$\optfn(\preds) = \argmax_{\decision} \sum_{j = 0}^N (1 - \prod_{i = 0}^M (1 - z_i \cdot \pred_{ij}))$, where $z_i$ can
be either 0 or 1. This is a submodular maximization problem.
\end{itemize}

\noindent \textbf{Portfolio Optimization~\cite{donti2017task}}\hspace{1em} Based on the Markovitz
model~\citep{markowitz2000mean}, the aim is to predict future stock prices in order to create a portfolio that has high
returns but low risk.
\begin{itemize}[leftmargin=1em,itemsep=0em,topsep=0.2em]
\item \textit{Predict:} Predict the future stock price $y_n$ for each stock $n$ using its historical data $\feature_n$.
The historical data includes information on 50 stocks obtained from the QuandlWIKI dataset~\citep{QuandlWIKI}. The model
$\predmodel$ is a 2-layer feedforward network with a hidden dimension of 500.
\item \textit{Optimize:} Choose a distribution $\decision$ over stocks to maximize $\decision^T\labels - \lambda \cdot
\preds^TQ\preds$ based on a known correlation matrix $Q$ of stock prices. Here, $\lambda = 0.001$ represents the
constant for risk aversion.
\end{itemize}

\vspace{0.5em}\noindent For each set of experiments, we run 10 experiments with different train-test splits, and
randomized initializations of the predictive model and loss function parameters. Details of the computational
resources~\cite{OhioSupercomputerCenter1987} and hyperparameter optimization used are given in \cref{sec:extresults}.

For all of these domains, the metric of interest is the decision quality achieved by the predictive model $\predmodel$
on the hold-out test set when trained with the loss function in question. However, given that the scales of the decision
quality for each domain vary widely, we linearly re-scale the value such that 0 corresponds to the DQ of making
predictions uniformly at random $\pred = \epsilon \sim U[0, 1]$ and 1 corresponds to making perfect predictions $\pred =
\singlelabel$. Concretely:
$$\text{Normalized DQ}(\preds, \labels) = \frac{DQ(\preds, \labels) - DQ(\bm{\epsilon}, \labels)}{DQ(\labels, \labels) -
DQ(\bm{\epsilon}, \labels)}$$

\subsection{Overall Results}\label{sec:results} We compare our approach against the following baselines from the
literature in \cref{tab:overall}:
\begin{itemize}[leftmargin=1.2em,itemsep=0em,topsep=0.2em]
\item \textbf{MSE:} A standard regression loss.
\item \textbf{Expert-crafted Surrogates:} The end-to-end approaches described in \cref{sec:related} that require
handcrafting differentiable surrogate optimization problems for each domain
separately~\cite{donti2017task,wang2020automatically,elmachtoub2021smart,xie2020differentiable,wilder2019end}.
\item \textbf{L\&Z:} \citet{lawless2022note}'s approach for learning losses (equivalent to \citet{chung2022decision}).
\item \textbf{LODL:} \citet{shah2022decision}'s approach for learning loss functions. Trained using 32 and 2048 samples.
\end{itemize}

\begin{table*}[t]
\centering
\caption{\textbf{Comparison to LODLs.} MBS $\to$ Model-based Sampling, FBP $\to$ Feature-based Parameterization, and EGL
= LODL + MBS + FBP. The entries represent the Mean Normalized Test DQ $\pm$ SEM. EGLs improve the DQ for almost every
choice of loss function family and domain.}
\label{tab:vslodl}
\resizebox{0.82\linewidth}{!}{%
\begin{tabular}{cccccc}
\toprule
 \multirow{3}[4]{*}{Domain} & \multirow{3}[4]{*}{Method} & \multicolumn{4}{c}{Normalized Test Decision Quality} \\ 
 \cmidrule(lr){3-6} && \makecell{Directed\\Quadratic} & \makecell{Directed\\WeightedMSE} & Quadratic & WeightedMSE \\
\arrayrulecolor{black}\midrule \multirow{4}[5]{*}{\makecell{Cubic\\Top-K}} & LODL & -0.38 $\pm$ 0.29 & -0.86 $\pm$ 0.10
& -0.76 $\pm$ 0.19 & -0.95 $\pm$ 0.01 \\ & \textcolor{black!40}{\makecell{LODL (2048 samples)}} &
\textcolor{black!40}{-0.94 $\pm$ 0.01} & \textcolor{black!40}{0.96 $\pm$ 0.00} & \textcolor{black!40}{-0.95 $\pm$ 0.01}
& \textcolor{black!40}{-0.96 $\pm$ 0.00} \\
\arrayrulecolor{black!40}\cmidrule(lr){2-6} & EGL (MBS) & \textbf{0.96 $\pm$ 0.00} & \textbf{0.96 $\pm$ 0.00} & 0.77
$\pm$ 0.13 & \textbf{0.77 $\pm$ 0.13} \\ & EGL (FBP) & 0.58 $\pm$ 0.26 & \textbf{0.96 $\pm$ 0.00} & -0.28 $\pm$ 0.21 &
-0.77 $\pm$ 0.11 \\ & EGL (Both) & \textbf{0.96 $\pm$ 0.00} & 0.77 $\pm$ 0.13 & \textbf{0.96 $\pm$ 0.00} & \textbf{0.77
$\pm$ 0.13} \\
\arrayrulecolor{black}\midrule \multirow{5}[6]{*}{\makecell{Web\\Advertising}} & LODL & 0.75 $\pm$ 0.05 & 0.72 $\pm$
0.03 & 0.84 $\pm$ 0.04 & 0.71 $\pm$ 0.03 \\ & \textcolor{black!40}{\makecell{LODL (2048 samples)}} &
\textcolor{black!40}{0.93 $\pm$ 0.01} & \textcolor{black!40}{0.84 $\pm$ 0.02} & \textcolor{black!40}{0.93 $\pm$ 0.01} &
\textcolor{black!40}{0.78 $\pm$ 0.03} \\
\arrayrulecolor{black!40}\cmidrule(lr){2-6} & EGL (MBS) & 0.86 $\pm$ 0.03 & \textbf{0.83 $\pm$ 0.03} & 0.78 $\pm$ 0.06 &
0.77 $\pm$ 0.04 \\ & EGL (FBP) & 0.93 $\pm$ 0.02 & 0.80 $\pm$ 0.03 & \textbf{0.92 $\pm$ 0.01} & 0.75 $\pm$ 0.04 \\ & EGL
(Both) & \textbf{0.95 $\pm$ 0.01} & 0.78 $\pm$ 0.06 & \textbf{0.92 $\pm$ 0.02} & \textbf{0.81 $\pm$ 0.04} \\
\arrayrulecolor{black}\midrule \multirow{5}[6]{*}{\makecell{Portfolio\\Optimization}} & LODL & \textbf{0.146 $\pm$
0.003} & 0.136 $\pm$ 0.003 & 0.145 $\pm$ 0.003 & 0.122 $\pm$ 0.003 \\ & \textcolor{black!40}{\makecell{LODL (2048
samples)}} & \textcolor{black!40}{0.154 $\pm$ 0.005} & \textcolor{black!40}{0.141 $\pm$ 0.004} &
\textcolor{black!40}{0.147 $\pm$ 0.004} & \textcolor{black!40}{0.113 $\pm$ 0.014} \\
\arrayrulecolor{black!40}\cmidrule(lr){2-6} & EGL (MBS) & 0.135 $\pm$ 0.011 & 0.138 $\pm$ 0.010 & 0.146 $\pm$ 0.015 &
0.108 $\pm$ 0.009 \\ & EGL (FBP) & 0.139 $\pm$ 0.005 & 0.141 $\pm$ 0.008 & \textbf{0.147 $\pm$ 0.008} & 0.136 $\pm$
0.004 \\ & EGL (Both) & 0.134 $\pm$ 0.013 & \textbf{0.127 $\pm$ 0.011} & 0.145 $\pm$ 0.011 & \textbf{0.153 $\pm$ 0.004}
\\
\arrayrulecolor{black}\bottomrule \end{tabular}%
}
\end{table*}

\begin{table}[h]
    \centering
    \caption{\textbf{Time taken to run the meta-algorithm} (\cref{sec:lodls}) for comparable WeightedMSE EGLs and LODLs
    on the Web Advertising domain. EGLs take only 6\% of the LODLs' time to train.}
    \resizebox{\linewidth}{!}{%
    \begin{tabular}{c c c} 
        \toprule
        \multirow{2}[3]{*}{\makecell{\textbf{Time Taken}\\\textcolor{black!40}{(in seconds)}}} &
        \multicolumn{2}{c}{\textbf{Method}}\\
        \cmidrule{2-3}
        & LODL & EGL \eagle{} \\
        \midrule
        \makecell{Samping $\samples$\\\textcolor{black!40}{(Step 1)}} & \makecell{0.18 $\pm$
        0.01\\\textcolor{black!40}{(Gaussian Sampling)}} & \makecell{0.48 $\pm$ 0.04\\\textcolor{black!40}{(MBS)}} \\
        \makecell{Generating Dataset\\\textcolor{black!40}{(Step 2)}} & \makecell{10376.65 $\pm$
        119.81\\\textcolor{black!40}{(2048 samples)}} & \makecell{200.43 $\pm$ 4.24\\\textcolor{black!40}{(32 samples)}}
        \\
        \makecell{Learning Losses\\\textcolor{black!40}{(Step 3)}} & \makecell{53.67 $\pm$
        1.84\\\textcolor{black!40}{(Separate Losses)}} & \makecell{445.67 $\pm$ 62.42\\\textcolor{black!40}{(FBP)}} \\
        \midrule
        \makecell{Total} & 10430.50 $\pm$ 131.66 & 646.58 $\pm$ 66.70\\
        \bottomrule
    \end{tabular}%
    }
    \label{tab:runtime}
\end{table}

\noindent \textbf{(Hard) Cubic Top-K}\hspace{1em}We empirically verify our analysis from~\cref{sec:localness} by testing
different baselines on our proposed ``hard'' top-k domain. In \cref{tab:overall}, we see that all our baselines perform
extremely poorly in this domain. Even the expert-crafted surrogate only achieves a DQ of $0.24$ while EGLs achieve the
best possible DQ of $0.69$; this corresponds to a gain of nearly 200\% for EGLs.

\noindent \textbf{Domains for the Literature}\hspace{1em}We find that our method reaches state-of-the-art performance in
all the domains from the literature. In fact, we see that EGLs achieve similar performance to LODLs with an order of
magnitude fewer samples in two out of three domains. In \cref{sec:timing} below, we see that this corresponds to
\textit{an order of magnitude speed-up over learning LODLs of similar quality!}

\subsection{Computational Complexity Experiments}\label{sec:timing} We saw in \cref{sec:results} that EGLs perform as
well as LODLs with an order of magnitude fewer samples. In \cref{tab:runtime}, we show how this increased sample
efficiency translates to differences in runtime. We see that, by far, most of the time in learning LODLs is spent in
step 2 of our meta-algorithm. As a result, despite the fact that EGLs take longer to perform steps 1 and 3, the increase
in sample efficiency results in an \textit{order-of-magnitude} speedup over LODLs.

\subsection{Ablation Study}\label{sec:lodlcomparison} In this section, we compare EGLs to their strongest competitor
from the literature, i.e., LODLs \cite{shah2022decision}. Specifically, we look at the low-sample regime---when 32
samples per instance are used to train both losses---and present our results in \cref{tab:vslodl}. We see that EGLs
improve the decision quality for almost every choice of loss function family and domain. We further analyze
\cref{tab:vslodl} below.

 \textbf{Feature-based Parameterization (FBP):} Given that this is the low-sample regime, `LODL + FBP' almost always
 does better than just LODL. These gains are especially apparent in cases where adding more samples would improve LODL
 performance---the ``Directed'' variants in the Cubic Top-K domain, and the ``Quadratic'' methods in the Web Advertising
 domain.

\textbf{Model-based Sampling (MBS):} This contribution is most useful in the Cubic Top-K domain, where the localness
assumption is broken. Interestingly, however, MBS also improves performance in the other two domains where the localness
assumption does not seem to be broken (\cref{tab:mseerror} in \cref{sec:localexp}). We hypothesize that MBS helps here
in two different ways:
\begin{enumerate}[leftmargin=1.2em,itemsep=0em,topsep=-0.5em]
\item \emph{Increasing effective sample efficiency:} We see that, in the cases where FBP helps most, the gains from MBS
stack with FBP. This suggests that MBS helps improve sample-efficiency. Our hypothesis is that model-based sampling
allows us to focus on predictions that would lead to a ``fork in the training trajectory'', leading to improved
performance with fewer samples.
\item \emph{Helping WeightedMSE models:} MBS also helps improve the \textit{worst-performing} WeightedMSE models in
these domains which, when combined with FBP, outperform even LODLs with 2048 samples. This suggests that MBS does more
than just increase sample efficiency. We hypothesize that MBS also reduces the search space by limiting the set of
samples $\samples$ to ``realistic predictions'', allowing even WeightedMSE models that have fewer parameters to perform
well in practice. 
\end{enumerate}

\vspace{0.5em}
\textbf{Portfolio Optimization:} The results for this domain don't follow the trends noted above because there is a
distribution shift between the validation and test sets in this domain (as the train/test/validation split is temporal
instead of i.i.d.).  In \cref{tab:portfolioopt} (\cref{sec:portfolioapp}), we see that EGLs outperform LODLs and follow
the trends noted above if we measure their performance on the validation set, which is closer in time to training (and
hence has less distribution shift). 

\section*{Acknowledgements}
Sanket Shah was supported by the ARO under Grant Number: W911NF-18-1-0208. The views and conclusions contained in this
document are those of the author s and should not be interpreted as representing the official policies, either expressed
or implied, of ARO or the U.S. Government. The U.S. Government is authorized to reproduce and distribute reprints for
Government purposes notwithstanding any copyright notation herein. Bryan Wilder was supported by NSF (Award Number
2229881). 

\bibliography{main}

\clearpage
\onecolumn
\appendix

\section{Proof of \cref{lemma:weights}}\label{sec:lemmaproof}
\begin{proof}
    The loss value for a prediction $\pred$ made using features $\feature$ for a weight-the-MSE type loss
    $L^{\text{WMSE}}$ is:
    \begin{align*}
        L^{\text{WMSE}}_{\feature}(\preds) &=  \mathbb{E}_{\labels | \feature}[\sum_{n=0}^{N} w^{\labels}_n (\pred_n - \singlelabel_n)^2]
    \end{align*}
    To find the optimal prediction, we differentiate the RHS with respect to the prediction $\pred$ corresponding to the
    feature $\feature$ and equate it to $0$. Importantly, the prediction $\pred$ is \textit{not} dependent on the
    distributions $P(\labels | \feature)$. Moreover, because we assume infinite model capacity, the prediction $\pred$
    can also take any value and is independent of any other prediction $\pred' \in \preds_{-\pred}$. As a result:
    \begin{align*}
        0 &= \frac{\delta}{\delta \pred} \mathbb{E}[\sum_{n=0}^{N} w^{\labels}_n (\pred_n - \singlelabel_n)^2] \\ &= \mathbb{E}[ \frac{\delta}{\delta \pred} \sum_{n=0}^{N} w^{\labels}_n (\pred_n - \singlelabel_n)^2] \\ &= \mathbb{E}[ w^{\labels} (\pred - \singlelabel)] = \mathbb{E}[w^{\labels}] \cdot \pred - \mathbb{E}[w^{\labels} \cdot \singlelabel] \\
        \implies \pred &= \frac{\mathbb{E}_{\labels | \feature}[w^{\labels} \cdot \singlelabel]}{\mathbb{E}_{\labels | \feature}[w^{\labels}]}
    \end{align*}
\end{proof}

\section{Numerical Details for the Counter-Example in \cref{sec:example}}\label{sec:calculation} We give more numerical
details for the motivating example step-wise below:
\begin{itemize}[leftmargin=1.2em]
    \item \textbf{Step 1:} For each instance, $\labels^{\text{\color{figureblue} blue}} =
    (\singlelabel^{\text{\color{figureblue} blue}}_A, \singlelabel^{\text{\color{figureblue} blue}}_B) =
    {\color{figureblue} (0, 0.55)}$ and $\labels^{\text{\color{figureorange} orange}} =
    (\singlelabel^{\text{\color{figureorange} orange}}_A, \singlelabel^{\text{\color{figureorange} orange}}_B) =
    {\color{figureorange} (1, 0.55)}$, we generate 15 possible predictions by adding noise $\epsilon \sim U[-1, 1]$ to
    $\singlelabel^{\text{\color{figureblue} blue}}_A$ and $\singlelabel^{\text{\color{figureorange} orange}}_A$
    respectively. Concretely, we consider the following possible predictions $\samples^{\text{\color{figureblue} blue}}
    \in \{{\color{figureblue} (-1, 0.55)}, {\color{figureblue} (-0.86, 0.55)}, \ldots, {\color{figureblue} (0.86,
    0.55)}, {\color{figureblue} (1, 0.55)}\}$, and $\samples^{\text{\color{figureorange} orange}} \in
    \{{\color{figureorange} (0, 0.55)}, {\color{figureorange} (0.14, 0.55)}, \ldots, {\color{figureorange} (1.86,
    0.55)}, {\color{figureorange} (2, 0.55)}\}\}$.
    \item \textbf{Step 2:} For each of these possible predictions, we calculate the optimal decision $\optfn(\samples)$
    and then the decision quality regret $\dqregret(\samples, \labels) = \obj(\optfn(\labels), \labels) -
    \obj(\optfn(\samples),\labels)$. We document these values in the following table:\vspace{0.5em}\\
    \begin{minipage}{\linewidth}
    \centering
    \begin{tabular}{c c c}
        \toprule
        \makecell{Possible Predicted\\Utilities $\samples$} & \makecell{Optimal Decision\\$\optfn(\samples)$} &
        \makecell{Decision Quality Regret\\$\dqregret(\samples, \labels)$} \\
        \midrule
        {\color{figureblue} (-1, \textbf{0.55})} & Give Resource to \textbf{B} & $\singlelabel^{\text{\color{figureblue}
        blue}}_B - \singlelabel^{\text{\color{figureblue} blue}}_{\mathbf{B}} = 0$ \\
        \llap{9 values $\longrightarrow$ \quad}\vdots & \vdots & \vdots\\
        {\color{figureblue} (0.43, \textbf{0.55})} & Give Resource to \textbf{B} &
        $\singlelabel^{\text{\color{figureblue} blue}}_B - \singlelabel^{\text{\color{figureblue} blue}}_{\mathbf{B}} =
        0$ \\
        {\color{figureblue} (\textbf{0.57}, 0.55)} & Give Resource to \textbf{A} &
        $\singlelabel^{\text{\color{figureblue} blue}}_B - \singlelabel^{\text{\color{figureblue} blue}}_{\mathbf{A}} =
        \mathbf{0.55}$ \\
        \llap{2 values $\longrightarrow$ \quad}\vdots & \vdots & \vdots\\
        {\color{figureblue} (\textbf{1}, 0.55)} & Give Resource to \textbf{A} & $\singlelabel^{\text{\color{figureblue}
        blue}}_B - \singlelabel^{\text{\color{figureblue} blue}}_{\mathbf{A}} = \mathbf{0.55}$ \\
        \midrule
        {\color{figureorange} (0, \textbf{0.55})} & Give Resource to \textbf{B} &
        $\singlelabel^{\text{\color{figureorange} orange}}_A - \singlelabel^{\text{\color{figureorange}
        orange}}_{\mathbf{B}} = \textbf{0.45}$ \\
        \llap{2 values $\longrightarrow$ \quad}\vdots & \vdots & \vdots\\
        {\color{figureorange} (0.43, \textbf{0.55})} & Give Resource to \textbf{B} &
        $\singlelabel^{\text{\color{figureorange} orange}}_A - \singlelabel^{\text{\color{figureorange}
        orange}}_{\mathbf{B}} = \textbf{0.45}$ \\
        {\color{figureorange} (\textbf{0.57}, 0.55)} & Give Resource to \textbf{A} &
        $\singlelabel^{\text{\color{figureorange} orange}}_A - \singlelabel^{\text{\color{figureorange}
        orange}}_{\mathbf{A}} = 0$ \\
        \llap{9 values $\longrightarrow$ \quad}\vdots & \vdots & \vdots\\
        {\color{figureorange} (\textbf{2}, 0.55)} & Give Resource to \textbf{A} &
        $\singlelabel^{\text{\color{figureorange} orange}}_A - \singlelabel^{\text{\color{figureorange}
        orange}}_{\mathbf{A}} = 0$ \\
        \bottomrule
    \end{tabular}
    \end{minipage}
        \item \textbf{Step 3:} Based on the datasets $\{\samples^{\text{\color{figureblue} blue}}\}$ and
        $\{\samples^{\text{\color{figureorange} orange}}\}$ for each instance, we learn weights for the corresponding
        instance by learning a weight for each instance that minimizes the MSE loss. Specifically, we choose:
        \begin{gather*}
            w^{\text{{\color{figuregreen}color}}} = \argmin_{\hat{w}^{\text{{\color{figuregreen}color}}}} \frac{1}{15} \sum_{\samples^{\text{\color{figuregreen}color}}} [\dqregret(\samples^{\text{\color{figuregreen}color}}, \labels^{\text{\color{figuregreen}color}}) - \text{WMSE}_{\hat{w}^{\text{{\color{figuregreen}color}}}}(\samples^{\text{\color{figuregreen}color}}, \labels^{\text{\color{figuregreen}color}})]^2\\
            \text{where,} \quad \text{WMSE}_{\hat{w}^{\text{{\color{figuregreen}color}}}}(\samples^{\text{\color{figuregreen}color}}, \labels^{\text{\color{figuregreen}color}}) \equiv \hat{w}^{\text{{\color{figuregreen}color}}} \cdot (\sample^{\text{\color{figuregreen}color}}_A - \singlelabel^{\text{\color{figuregreen}color}}_A)^2
        \end{gather*}
        To calculate the $\argmin$ we run gradient descent, and get $w^{\text{\color{figureblue} blue}} \approx 0.385$
        and $w^{\text{\color{figureorange} orange}} \approx 0.582$.
    \item \textbf{Step 4:} Based on \cref{lemma:weights}, we know that:
    \begin{align*}
        \pred &= \frac{\mathbb{E}_{\labels | \feature}[w^{\labels} \cdot \singlelabel]}{\mathbb{E}_{\labels | \feature}[w^{\labels}]}
    \end{align*}
    Then, plugging the values of the weights from Step 3 into the formula above, we get:
    \begin{align*}
        \pred_A &= \frac{p^{\text{\color{figureblue} blue}} \cdot w^{\text{\color{figureblue} blue}} \cdot \singlelabel^{\text{\color{figureblue} blue}}_A + p^{\text{\color{figureorange} orange}} \cdot w^{\text{\color{figureorange} orange}} \cdot \singlelabel^{\text{\color{figureorange} orange}}_A}{p^{\text{\color{figureblue} blue}} \cdot w^{\text{\color{figureblue} blue}}+ p^{\text{\color{figureorange} orange}} \cdot w^{\text{\color{figureorange} orange}}}\\ &= \frac{0.5 \cdot 0.385 \cdot 0 + 0.5 \cdot 0.582 \cdot 1}{0.5 \cdot 0.385 + 0.5 \cdot 0.582} \approx 0.602
    \end{align*}
\end{itemize}

\section{Visualizations for the Cubic Top-K domain} \label{sec:localnessapp}
\begin{figure}[H]
    \centering
    \includegraphics[width=0.6\linewidth]{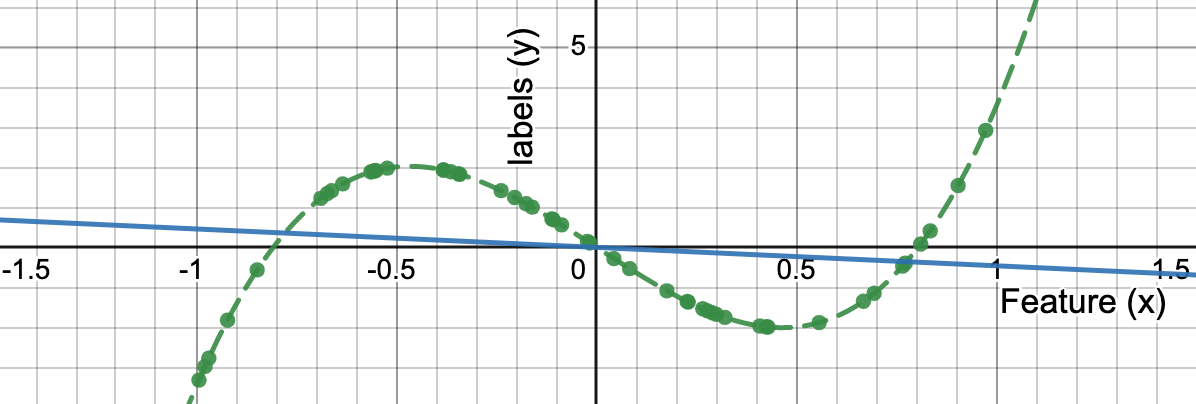}
    \caption{\textbf{Cubic Top-K Domain.} {\color{figuregreen} The underlying mapping from $\feature \to \singlelabel$
    is given by the green dashed line.} The set $\labels$ consists of $N=50$ points where $\feature_n \sim U[-1, 1]$,
    and the goal is to predict the point with the largest $\singlelabel$. {\color{lodlblue} The linear model that
    minimizes the MSE loss is given in blue.}}
    \label{fig:cubic}
\end{figure}

\begin{figure}[H]
\centering
\begin{subfigure}[b]{0.45\linewidth}
     \centering
    \includegraphics[width=\linewidth]{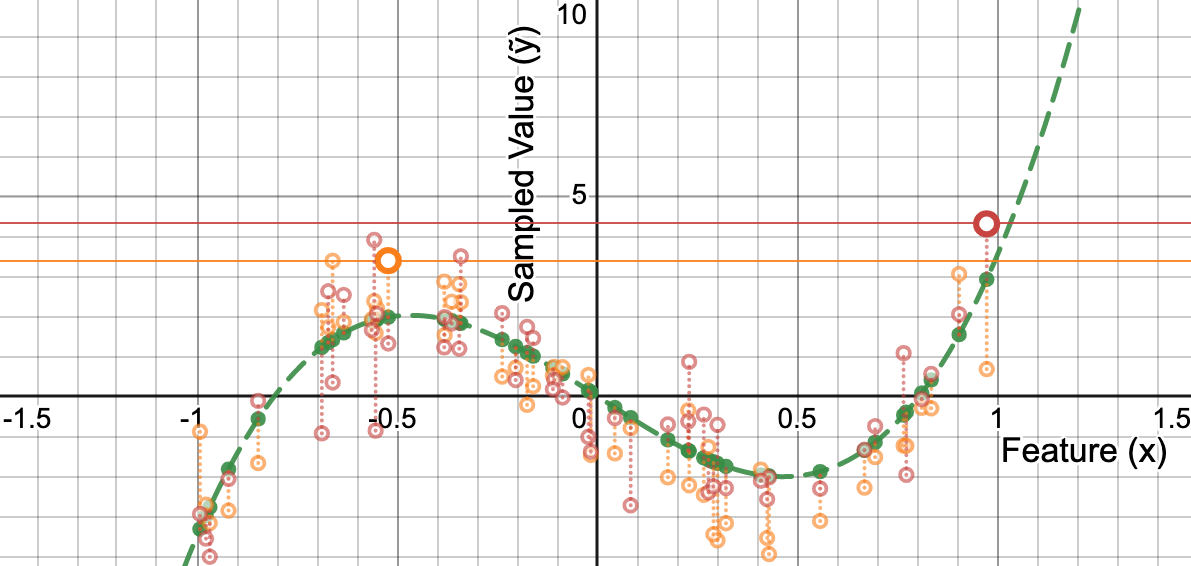}
     \subcaption{Cubic Top-K with Gaussian Sampling}
     \label{fig:cubicgaussian}
\end{subfigure}
\hfil
\begin{subfigure}[b]{0.45\linewidth}
     \centering
    \includegraphics[width=\linewidth]{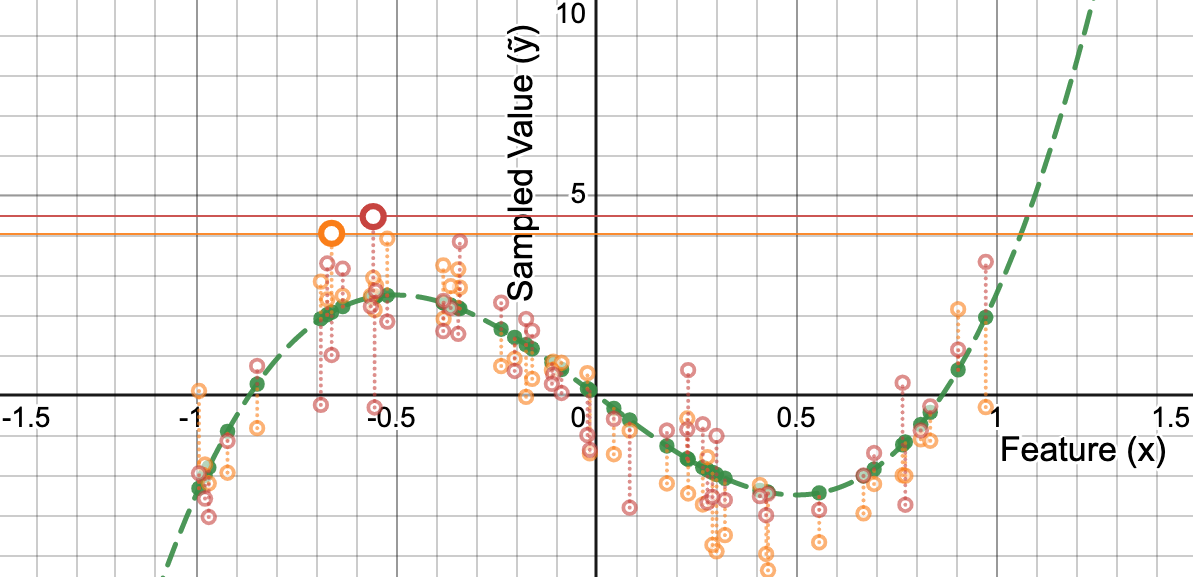}
     \subcaption{(Hard) Cubic Top-K with Gaussian Sampling}
     \label{fig:hardcubicgaussian}
 \end{subfigure}
\\
\begin{subfigure}[b]{0.6\linewidth}
     \centering
    \includegraphics[width=\linewidth]{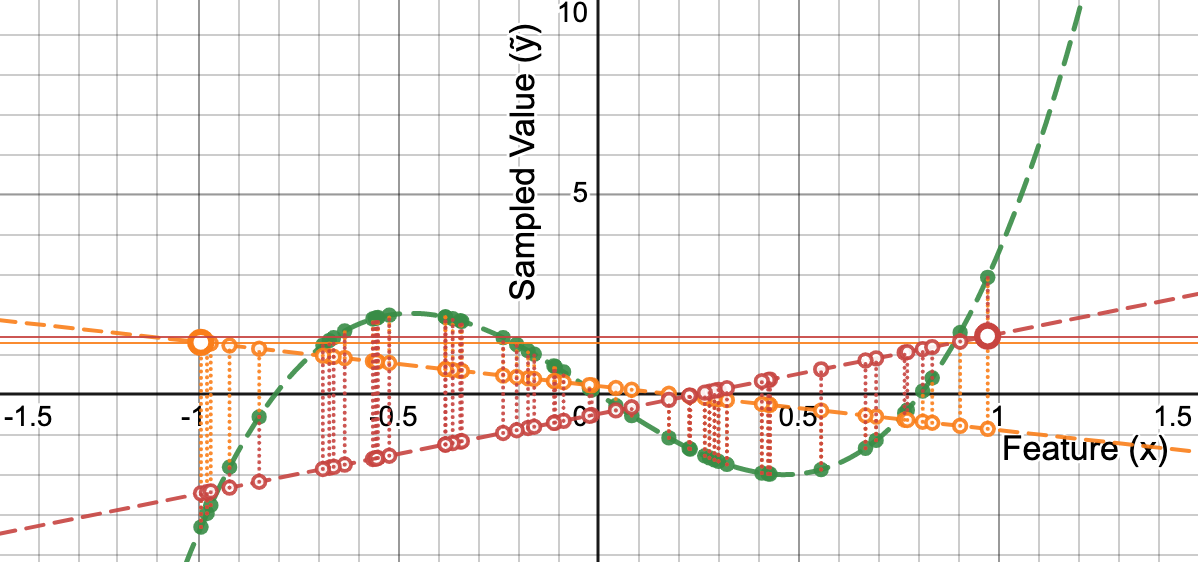}
     \subcaption{Cubic Top-K with Model-based Sampling}
     \label{fig:cubicmbs}
\end{subfigure}
\caption{\textbf{Visualizing Sampling Strategies for the Cubic Top-K Domain.} {\color{figuregreen} The points in green
represent the true labels for some instance $\labels$ with the dashed curve representing the underlying mapping
$\feature \to \singlelabel$.} The points in {\color{figureorange} orange} and {\color{figurered} red} each represent a
set of sampled predictions {\color{figureorange} $\samples_{\text{orange}}$} and {\color{figurered}
$\samples_{\text{red}}$} with the larger point denoting the sampled prediction with the maximum value.}
\label{fig:localness}
\end{figure}

\section{Additional Results}\label{sec:extresults} We run our experiments on an internal
cluster~\cite{OhioSupercomputerCenter1987}. Each job was run with up to 16GM of memory, 8 cores of an Intel Xeon Cascade
Lake CPUs, and optionally one Nvidia A100 GPU. For each method and choice of hyperparameters, we run 10 experiments with
different train-test splits, and randomized initializations of the predictive model and loss function parameters. Then,
for each method, we choose the best hyperparameters based on the highest average decision quality on a validation set.
The corresponding normalized test decision qualities are then reported in Table 3 (for LODLs and EGLs) and Table 1 (for
other models). The best values across different loss function families in Table 3 are then summarized in Table 1. Some
of the hyperparameters that we vary are the learning rates, the number of epochs and patience, the batch size, the loss
function families, and the amount of 2-stage loss we mix into the “expert-crafted surrogates” (in multiples of 10 over
reasonable values). Given the number of hyperparameters, we do not try all combinations but instead manually iterate
over subsets of promising hyperparameter values.

\subsection{Localness of Predictions in Different Domains} \label{sec:localexp}
\begin{table}[H]
    \centering
    \caption{\textbf{Validating the ``localness of predictions'' for different domains}. The error on the validation set
    for predictive models trained on the MSE loss. We see that the localness assumption breaks for the Cubic Top-K
    domains because the errors are high, implying that the predictions are \textit{not} close to the true labels.}
    \label{tab:mseerror}
    \begin{tabular}{cc}
    \toprule
    Domain & Final Validation MSE \\
    \midrule
    Portfolio Optimization & 0.000402 \\
    Web Advertising & 0.063420 \\
    Cubic Top-K & 2.364202 \\
    (Hard) Cubic Top-K & 3.015765 \\
    \bottomrule
    \end{tabular}
\end{table}

\subsection{Analyzing the Portfolio Optimization Domain}\label{sec:portfolioapp}
\begin{table}[H]
\centering
\caption{\textbf{\emph{Validation} DQ for Portfolio Optimization} MBS $\to$ Model-based Sampling, FBP $\to$
Feature-based Parameterization, and EGL = LODL + MBS + FBP. The entries represent the Mean Normalized
\textbf{Validation} DQ $\pm$ SEM. All the EGL variants outperform LODLs on the validation DQ.}
\label{tab:portfolioopt}
\resizebox{0.9\linewidth}{!}{%
\begin{tabular}{cccccc}
\toprule
 \multirow{3}[4]{*}{Domain} & \multirow{3}[4]{*}{Method} & \multicolumn{4}{c}{Normalized \textbf{Validation} Decision
 Quality} \\ 
 \cmidrule(lr){3-6} && \makecell{Directed\\Quadratic} & \makecell{Directed\\WeightedMSE} & Quadratic & WeightedMSE \\
\arrayrulecolor{black}\midrule \multirow{5}[6]{*}{\makecell{Portfolio\\Optimization}} & LODL & 0.170 $\pm$ 0.006 & 0.150
$\pm$ 0.007 & 0.165 $\pm$ 0.006 & 0.134 $\pm$ 0.007 \\ & \textcolor{black!40}{\makecell{LODL (2048 samples)}} &
\textcolor{black!40}{0.189 $\pm$ 0.009} & \textcolor{black!40}{0.162 $\pm$ 0.009} & \textcolor{black!40}{0.165 $\pm$
0.008} & \textcolor{black!40}{0.144 $\pm$ 0.014} \\
\arrayrulecolor{black!40}\cmidrule(lr){2-6} & EGL (MBS) & 0.171 $\pm$ 0.026 & 0.160 $\pm$ 0.027 & 0.179 $\pm$ 0.047 &
0.140 $\pm$ 0.018 \\ & EGL (FBP) & 0.172 $\pm$ 0.009 & 0.175 $\pm$ 0.030 & 0.170 $\pm$ 0.022 & 0.151 $\pm$ 0.009 \\ &
EGL (Both) & \textbf{0.186 $\pm$ 0.020} & \textbf{0.179 $\pm$ 0.024} & \textbf{0.190 $\pm$ 0.022} & \textbf{0.172 $\pm$
0.009} \\
\arrayrulecolor{black}\bottomrule \end{tabular}%
}
\end{table}

\subsection{Time taken to generate dataset for learned losses}\label{sec:datasetgencost}
\begin{figure}[H]
    \centering
    \includegraphics[width=0.4\linewidth]{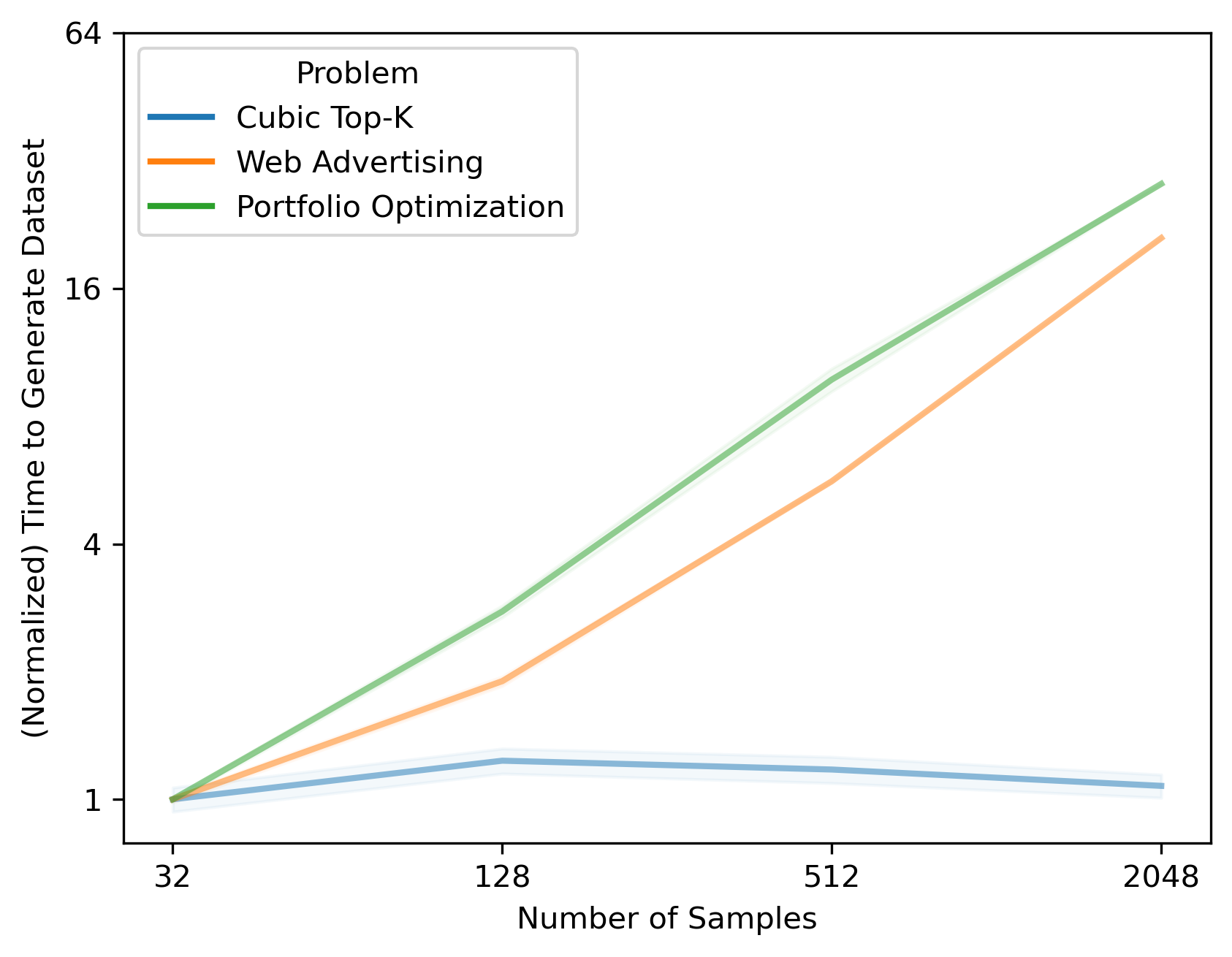}
    \caption{The amount of time taken to create the dataset used to train the loss functions vs. the number of samples per instance $\labels$ in that dataset. To make the results comparable across different experimental domains, we divide the actual generation time by the time taken to generate a dataset containing 32 samples for each domain. We see that for the Web Advertising and Portfolio Optimization domains, the cost scales roughly linearly with the number of samples. For the Cubic Top-K domain, the decision-making problem (top-k) is trivial and the computation is determined by other overheads, leading to a near-constant generation time.}
    \label{fig:timevssamples}
\end{figure}

\subsection{Sensitivity Analysis}
\begin{table}[H]
    \centering
    \caption{\textbf{Varying the complexity of the mapping $P_\psi$ by varying the number of layers in the NN used to
    implement $P$.} We find that, interestingly, whether or not we need a high model complexity depends on the choice of
    the loss function family. For the ‘Quadratic’ loss function families, which perform well in the Web Advertising
    domain (Table 3), we need models with high capacity because of the non-linear mapping between features and
    parameters. Conversely, for ‘WeightedMSE'-type loss functions, which are optimal in the other two domains, lower
    model capacity works better. This is especially true for the Portfolio Optimization domain, where we overfit the
    validation set (\cref{sec:lodlcomparison}).}
    \begin{tabular}{cccc}
\toprule
\multirow{3}[4]{*}{\makecell{Number\\of Layers}} & \multicolumn{3}{c}{Normalized Test DQ}\\
 \cmidrule(lr){2-4} & \makecell{Cubic\\Top-K} & \makecell{Web\\Advertising} & \makecell{Portfolio\\Optimization} \\
\midrule
1 & 0.58 $\pm$ 0.25 & 0.91 $\pm$ 0.04 & \textbf{0.13 $\pm$ 0.02} \\
2 & 0.58 $\pm$ 0.25 & 0.9 $\pm$ 0.02 & 0.12 $\pm$ 0.02 \\
3 & \textbf{0.77 $\pm$ 0.19} & 0.9 $\pm$ 0.01 & 0.12 $\pm$ 0.02 \\
4 & 0.2 $\pm$ 0.21 & \textbf{0.93 $\pm$ 0.01} & 0.12 $\pm$ 0.01 \\
5 & 0.2 $\pm$ 0.31 & 0.92 $\pm$ 0.02 & 0.1 $\pm$ 0.02 \\
\bottomrule
\end{tabular}
\end{table}

\begin{table}[H]
    \centering
    \caption{\textbf{Varying the complexity of the mapping $P_\psi$ by varying the number of layers in the NN used to
    implement $P$.} We find that, in general, a larger number of models is better (8 \& 16 models seem to do better on
    average than 1 \& 2 models).}
    \begin{tabular}{cccc}
\toprule
\multirow{3}[4]{*}{\makecell{Number of\\ Sampling\\Models}} & \multicolumn{3}{c}{Normalized Test DQ}\\
 \cmidrule(lr){2-4} & \makecell{Cubic\\Top-K} & \makecell{Web\\Advertising} & \makecell{Portfolio\\Optimization} \\
\midrule
1 & 0.2 $\pm$ 0.31 & 0.88 $\pm$ 0.04 & 0.11 $\pm$ 0.02 \\
2 & 0.2 $\pm$ 0.31 & 0.89 $\pm$ 0.03 & 0.11 $\pm$ 0.01 \\
4 & 0.4 $\pm$ 0.29 & 0.89 $\pm$ 0.02 & \textbf{0.14 $\pm$ 0.01} \\
8 & 0.2 $\pm$ 0.21 & \textbf{0.93 $\pm$ 0.01} & 0.12 $\pm$ 0.01 \\
16 & \textbf{0.58 $\pm$ 0.25} & 0.88 $\pm$ 0.03 & 0.13 $\pm$ 0.02 \\
\bottomrule
\end{tabular}
\end{table}

\section{Limitations}
\begin{itemize}[leftmargin=1.2em,itemsep=0em,topsep=0.2em]
    \item \textbf{Smaller speed-ups in simple decision-making tasks:} In \cref{sec:timing} we show how a reduction in
    the number of samples needed to train loss functions almost directly corresponds to a speed-up in learning said
    losses. This is because Step 2 of the meta-algorithm (\cref{sec:lodls}), in which we have to run an optimization
    solver for multiple candidate predictions, is the rate-determining step. However, for simpler optimization problems
    that can be solved more quickly (e.g., the Cubic Top-K domain in \cref{fig:timevssamples}), this may no longer be
    the case. In that case, FBP is likely to have limited usefulness. Conversely, however, FBP is likely to be even more
    beneficial for more complex decision-making problems (e.g., MIPs~\cite{ferber2020mipaal} or RL
    tasks~\cite{wang2021learning}).
    \item \textbf{Limited Understanding of Why MBS Performs Well:} In this paper, we endeavor to provide
    \textit{necessary} conditions for when MBS allows EGLs to outperform LODLs, i.e., when the localness assumption is
    broken. However, EGLs seem to work well even when these conditions do not hold, e.g., in the Web Advertising and
    Portfolio Optimization domains. We provide hypotheses for why we believe this occurs in \cref{sec:lodlcomparison},
    but further research is required to rigorously test them.
\end{itemize}


\end{document}